\begin{document}
\title{Heuristic Algorithms for the Approximation of Mutual Coherence}
%
%
\author{Gregor Betz\inst{1}
\and
Vera Chekan\inst{1,2}\orcidID{0000-0002-6165-1566} 
\and
Tamara Mchedlidze \inst{1,3}\orcidID{0000-0002-1545-5580}
}
\authorrunning{G. Betz et al.}
%
\institute{Karlsruhe Institute of Technology, Karlsruhe, Germany \\
\email{gregor.betz@kit.edu}
\and{
Humboldt-Universität zu Berlin, Germany \\
\email{vera.chekan@informatik.hu-berlin.de}
}
\and
Utrecht University, Utrecht, The Netherlands\\
\email{t.mtsentlintze@uu.nl} 
}
\maketitle              
\begin{abstract}
Mutual coherence is a measure of similarity between two opinions. Although the notion comes from philosophy, it 	is essential for a wide range of technologies, e.g., the Wahl-O-Mat system. 
In Germany, this system helps voters to find candidates that are the closest to their political preferences.
The exact computation of mutual coherence is highly time-consuming due to the iteration over all subsets of an opinion. Moreover, for every subset, an instance of the SAT model counting problem has to be solved which is known to be a hard problem in computer science. 
This work is the first study to accelerate this computation. 
We model the distribution of the so-called confirmation values as a mixture of three Gaussians and present efficient heuristics to estimate its model parameters. 
The mutual coherence is then approximated with the expected value of the distribution.
Some of the presented algorithms are fully polynomial-time, others only require solving a small number of instances of the SAT model counting problem. The average squared error of our best algorithm lies below 0.0035 which is insignificant if the efficiency is taken into account. Furthermore, the accuracy is precise enough to be used in Wahl-O-Mat-like systems.

\keywords{Mutual Coherence  \and Argument Maps \and Heuristics \and Approximation \and Gaussian Mixture Model.}
\end{abstract}

\section{Introduction and Motivation}\label{sec:introduction_and_motivation}
A widely studied question in Bayesian epistemology is the internal coherence of two opinions, that is how consistent the opinion is or to which degree its ``parts'' support each other (the interpretation depends on the used measure of internal coherence). This notion can be generalized to the mutual coherence of two opinions. The computational issues of internal and mutual coherence are the same and in this paper, we focus on mutual coherence. 
Consider two pairs of opinions as an example \cite{Betz2019a}:
	\begin{enumerate}
		\item Opinion A: ``You should not eat animal products.''
		\item Opinion B: ``Animals have a right to life. Killing them in order to eat them is morally wrong.''	
	\end{enumerate}
	\begin{enumerate}
		\item Opinion A: ``You should not eat animal products.''
		\item Opinion B: ``Everyone can decide for themselves what they eat.''
	\end{enumerate}		
 
Intuitively, in the first pair, opinions seem to support each other whereas the second pair of opinions seem to be contradictory. For this reason, a (meaningful) measure of mutual coherence would assign a higher value to the first pair of opinions. 

There is no clear answer to the question ``When are two opinions coherent?''. 
Over the past twenty years, a multitude of coherence measures has been developed and studied. 
Most of the existing coherence measures are probabilistic, i.e., they assume that a prior probability distribution $P$ over the set of statements is given. For example, the first well-known coherence measure was proposed by Shogenji in~1999: 
\[
	C_1(A,B) = \frac{P(A \mid B)}{P(A)} = \frac{P(A \land B)} {P(A)\times P(B)}
\]
where $A, B \subseteq 2^S$ are the opinions~\cite{shogenji1999coherence}. This simple coherence measure has an obvious problem: once the opinions contain contradictory statements, the coherence value is equal to zero no matter how similar the remaining statements are. To resolve this problem, various coherence measures have been suggested. A wide overview of these measures can be found in~\cite{Douven2007a}. 
Unfortunately, all of them have further disadvantages.  It has even been proven that no ``perfect'' coherence measure (i.e., satisfying a set of the desired criteria) exists~\cite{Schippers2014a}.

One particular disadvantage of the probabilistic coherence measures is the assumption that a probability distribution $P$ over the set of statements is given as a part of the input. 
This risks to render the coherence measures utterly subjective.
To avoid this problem, a measure of mutual coherence based on the \#SAT problem was introduced by Betz~\cite{Betz2012a}. The definition of this measure will be presented in the next section. Instead of assuming a probability distribution, this measure relies on the so-called argument maps. An argument map is a graph-like representation of arguments, where an argument consists of a set of premises and a conclusion. The main problem of this measure is that the exact computation is highly time-consuming. 

Although mutual coherence is a theoretical concept in philosophy, it finds its application in the real world. In Germany, one of the most prominent examples is ``Wahl-O-Mat''. This system helps voters to find candidates that are close to their political preferences.
For this, candidates and voters fill in the same survey answering whether they agree or disagree with statements in it (it is also possible to abstain). The list of answers is then a simplified representation of the individual's political position.
To compute the similarity between two positions, a simple rule based on the Hamming-distance is used: for every statement, the same answer is scored with a positive value and distinct answers are scored negatively. The more points are scored, the more similar the positions are.
Finally, a set of candidates whose positions are the most similar to the voter is suggested. 

Due to its simplicity, this metric has a significant disadvantage: it ignores possible inferential relations between single statements.
With better coherence measures, a better quality of the suggestions can be achieved.  
Since the ``Wahl-O-Mat'' system is used by a multitude of voters, it has to perform highly efficiently. Especially, the mutual coherence needs to be calculated in a feasible time. 

The measure of mutual coherence that we study here is more sophisticated than the above-mentioned simple rule: using it in systems like ``Wahl-O-Mat'', would improve their quality. 
But since the naive computation of this measure is very time-consuming, it can not be used in the system so far.
And this motivates the goal of this project: we develop heuristics for the efficient approximation of this measure of mutual coherence.

The paper proceeds as follows: In~Section~\ref{sec:preliminaries}, we introduce required definitions and present the considered measure of mutual coherence. Next,~in~Section~\ref{sec:synthetic}, we motivate the need for synthetic argument maps and present an algorithm to create them. Then, in~Section~\ref{sec:heuristics}, we present the main contribution of this project, i.e., the heuristics for the efficient approximation of mutual coherence. After that, in~Section~\ref{sec:results}, the accuracy of the heuristics is demonstrated in experiments with synthetic data and finally, in~Section~\ref{sec:conclusion}, we summarize the results of the work.

\section{Preliminaries}\label{sec:preliminaries}

Here, we formally define the studied measure of mutual coherence and show that its naive computation is indeed highly time-consuming. 

First, we introduce the \textit{\#SAT} problem, also called the \textit{SAT model counting} problem. The instance of the problem is a boolean formula or, equivalently, it is a set of clauses (a clause is a logical disjunction, e.g., $a \lor \lnot b \lor c$). In the better-known \textit{SAT} problem, the question is whether there exists a truth-value assignment under which the formula becomes true. The SAT model counting problem generalizes the question and asks how many truth-value assignments make the formula true. Both problems are known to be hard in computer science. To be more precise, SAT is $\mathcal{NP}$-complete and \#SAT is \#$\mathcal{P}$-complete. In other words, it is conjectured that these problems admit no polynomial-time algorithms. Nevertheless, there exist diverse heuristics that are in practice efficient for many instances. The state-of-the-art \#SAT-solvers combine these heuristics and take into account both theoretical (e.g., unit propagation) and technical~(e.g., cache-efficiency) details. In our project, for model counting, we use the ganak model counter~\cite{SRSM19}. This implementation won the ``Model Counting Competition 2020'' and it belongs to the state-of-the-art model counters.

Let $S$ be a set of sentences (equivalent to boolean variables in terms of SAT) so that the set is closed under negation and let $N := |S|/2$. An \textit{argument}~$a$ is a tuple $(P_a, c_a)$ with $P_a \subseteq S$ and~$c_a \in S$ where $P_a$ denotes the set of \textit{premises} and~$c_a$ is the \textit{conclusion}. 
Let $I$ be a set of arguments. 

A pair $(S, I)$ is called a \textit{simple structured argumentation framework (SSAF)}. SSAFs can be visualized with \textit{Argument Maps (AMs)} that represent \textit{support} (green) and \textit{attack} (red) relations between statements
(see Figure~\ref{fig:argmap}). Every line in an AM corresponds to an argument. For example, consider the leftmost green line in Figure~\ref{fig:argmap}. The premises of the argument are ``Other animals eat meat.'' and ``It is okay for humans to do things if other animals do them.'' 
The conclusion is ``It is okay for humans to eat meat''. Arguments represent logical relations between statements. 
If a (consistent) speaker agrees with all premises of the argument, then he also agrees with its conclusion.
A red line denotes that agreeing with all premises requires disagreeing with the conclusion (e.g., the rightmost line in Figure~\ref{fig:argmap}).
This representation is a hyper-graph but it can be easily transformed into a (simple) graph by introducing a dummy-vertex for every hyper-edge.
This graph contains the whole logic of an SSAF and instead of working with the set $I$ of tuples $(P_a, c_a)$, it is possible to utilize the graph structure of AMs.

\begin{figure}[t]
    \centering
    \includegraphics[scale=0.6]{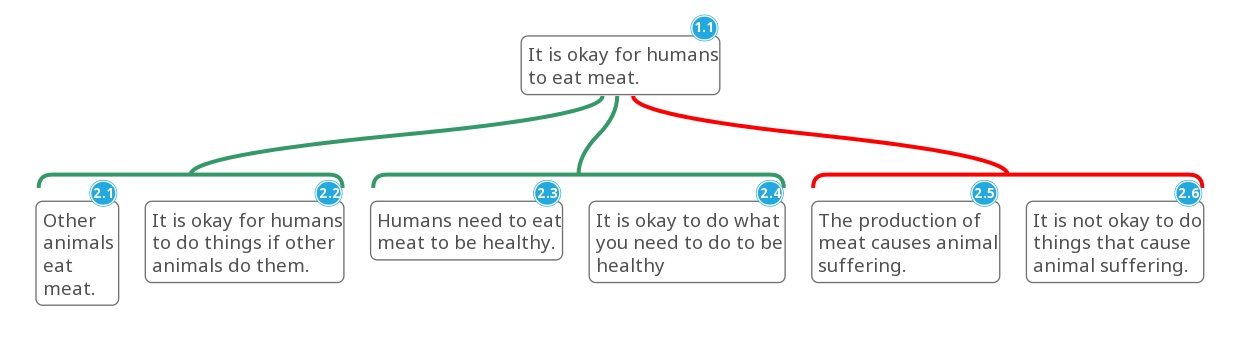}
    \caption{
        The support (attack) relations between arguments are depicted green (red). Source: \protect\url{https://github.com/mermaid-js/mermaid/issues/747}
    }
    \label{fig:argmap}
\end{figure}

A \textit{position} is a truth-value assignment to a set~$S_D \subseteq S$:
\[
	A: S_D \rightarrow \{\text{True, False}\}
\]
In the following, we use the terms ``opinion'' and ``position'' interchangeably. For simplicity, we sometimes define the position by the set of statements assigned the True-value and write $A \subseteq S$.

Position $A$ is \textit{complete} if its domain is $S$.  
We say that complete position $A$ \textit{extends} position~$B$ if positions $A$ and $B$ agree on values assigned to the domain of $B$. 
Complete position $A$ \textit{does not extend}~$B$ otherwise, i.e., if the positions disagree on at least one statement in the domain of $B$.
Complete position $A$ is \textit{consistent} if 
the following holds:
\begin{enumerate}
\item $\forall s \in S: A(s) \neq A(\lnot s)$
\item $\forall a = (P_a, c_a) \in I: \left (\left (\forall p \in P_a: A(p) = \text{ True}\right) \rightarrow A(c_a) = \text{ True} \right)$
\end{enumerate}
We denote the number of complete consistent positions extending (not extending) $A$ with $\sigma_A$ ($\sigma_{\lnot A}$). Note that these values depend on the set of arguments~$I$.
Similarly, for two positions $A$ and~$B$, we denote the number of complete consistent positions extending both $A$ and~$B$ with $\sigma_{A, B}$. 
Here, the \#SAT model counting problem appears: for example, $\sigma_A$ is the number of truth-value assignments of $S$ such that the arguments $I$ and statements of $A$ are true.
We write~$Y \vDash_I X$ ($Y \vDash_I \lnot X$) if every complete consistent position extending $Y$ extends (does not extend) $X$. Thus, $\vDash_I$ denotes the logical implication in terms of SAT.

The value 
\[
	DOJ(A\mid B) \stackrel{\text{def}}{=} \sigma_{A, B} / \sigma_B 
\]
is called the \textit{degree of justification} of $A$ by $B$.

The \textit{Kemeny-Oppenheim confirmation measure} is defined as follows: 
\[
	Conf(X, Y) \stackrel{\text{def}}{=}
	\begin{cases}
		\frac{DOJ(Y\mid X) - DOJ(Y\mid \lnot X)}{DOJ(Y\mid X) + DOJ(Y\mid \lnot X)} & \text{ if } Y \nvDash_I X \land Y \nvDash_I \lnot X \\
		1 & \text{ if } Y \vDash_I X \\
		-1 & \text{ if } Y \vDash_I \lnot X \\
	\end{cases}
\]

Finally, we can define the mutual coherence between positions $A$ and $B$ as introduced by Betz et al.~\cite{Betz2019a}:
\[
	MutCoh(A, B) \stackrel{\text{def}}{=} \frac{1}{2\cdot(2^{k_A} - 1)} \sum_{\emptyset \neq X \subseteq A}Conf(X, B) + \frac{1}{2\cdot(2^{k_B} - 1)} \sum_{\emptyset \neq X \subseteq B}Conf(X, A). 
\]
In the formula, $k_A$ and $k_B$ denote the domain size of $A$ and $B$, respectively.

In the following, writing about the mutual coherence we always refer to the measure $MutCoh(\cdot, \cdot)$.
Consider the straightforward computation of the mutual coherence. It requires an exponential (in the size of the opinion) number of iterations. In every iteration, we solve a constant number of \#SAT instances which in turn requires exponential~(in $|S|$) time in the worst-case. 
What concerns the second component of the running time, using state-of-the-art model counters would speed it up. 
Since this part only depends on the model counter, we will not further consider it in this paper.
However, the number of required iterations does not depend on the model counter. 
In this project, we focus on it and reduce the number of subsets taken into account and hence, the number of needed runs of the model counter (see Section~\ref{sec:heuristics}).

Finally, we observe that formula for the mutual coherence consists of two symmetric summands. For this reason, we introduce the \textit{one-sided coherence}:
\[
	OneCoh(A, B) \stackrel{\text{def}}{=} \frac{1}{2^{k_A} - 1} \sum_{\emptyset \neq X \subseteq A}Conf(X, B)
\]
which is the average confirmation value over the non-empty subsets of $A$. Then
\[
	MutCoh(A, B) = \frac{1}{2} \left( OneCoh(A, B) + OneCoh(B, A) \right)
\]

During the project, we have found out that it is meaningful to consider $OneCoh(A, B)$ and $OneCoh(B, A)$ separately. So in the following, we are only interested in the efficient approximation of $OneCoh(\cdot, \cdot)$. This yields then a still efficient approximation of $MutCoh(\cdot, \cdot)$.

\section{Synthetic Argument Maps}\label{sec:synthetic}

An important property of approximation heuristics is their scalability. In our case, this is the ability to stay accurate for growing instances.
There are two parameters of interest: the size of the AM and the size of the opinion. 
Particularly, we are interested in AMs of arbitrary size. 
During the project, we encountered the problem of missing test data. Although a huge number of AMs can be found in sources like \url{aifbd.org}, the absolute majority of them are small (i.e., consist of at most 25 statements). For such AMs, both problems (model counting and the computation of mutual coherence) can be solved in a feasible time and hence, there is no need for the approximation. Larger AMs exist and they are very important for the research but there are fewer of them and they are difficult to find in the public domain. 
For this reason, we have developed an algorithm for the creation of synthetic AMs. 
Mutual coherence is not the only area where AMs appear, so the algorithm can also be used beyond our project. 
Here, we sketch the process and a pseudocode representation is provided in Appendix~\ref{appendix:pseudocode}. 

The input of the algorithm $n, k, \alpha, \psi, \gamma,d$ is the set of parameters steering the size and the shape of the created AM.
The algorithm is randomized: run with the same input, it produces different AMs.
First of all, $n$ is the number of statements. The parameter $\alpha$ determines the number of arguments $\alpha \cdot n$: we observed that due to the tree-likeness of AMs, the number of arguments can be approximated as a linear function in the number of statements. The second parameter specifying the density of the AM is $d$. It is a probability function mapping an integer number $k$ to the average fraction of arguments with exactly $k$ premises in an AM. This number needs to be computed from an AM (or a set of AMs) which will then be mimicked by the algorithm. For example, we used:
\[
	d = \{(2, 0.19), (3, 0.23), (4, 0.32), (5, 0.26)\}
\]
This distribution was calculated from the AM of Veggie Debate\footnote{\url{debatelab.philosophie.kit.edu/sm_talk-daprpc_89.php}}. The remaining parameters $\psi, \gamma$, and $k$ control the creation of an AM. Parameter $k$ is the number of so-called \textit{key statements} which initialize the AM. They model the most important statements of the AM on top of which supporting and attacking arguments are built. 
The number~$k$ is typically small because debates are individuated with reference to central questions and key statements that are discussed. The debates documented in the Debater's Handbook, for example, mostly evolve around 2-3 key claims \cite{newman2013pros}. 

A \textit{level} of a statement in the current AM is defined as follows.
For a literal~$l$, let $Var(l)$ be the corresponding variable, i.e.,
\[
	Var(v) = Var(\lnot v) = v.
\] 
A key statement $s$ has level 0:
\[
	Level(s) = 0.
\]
Otherwise, for a statement $s$, we consider the set conclusions $C(s)$ of all arguments in which~$s$ occurs as a premise:
\[
	C(s) = \{ c \mid p_1 \land \dots \land p_k \rightarrow c \in I', s \in \{Var(p_1), \dots, Var(p_k)\} \}
\]
where $I'$ denotes the current set of arguments. Then the level of $s$ is defined by:
\[
	Level(s) = \min_{c \in C(s)} Level(c) + 1
\]
In other words, the level of a statement $s$ is the length of the shortest path from~$s$ to some key statement minus the number of arguments on this path.
Note that the level might change after an argument is added to the AM and the level is only defined for statements already added to the AM. 

The arguments are generated one by one according to the following scheme:
\begin{enumerate}
	\item To ensure that the arguments are built on top of the central statements, we pick a conclusion from the statements that are already in the AM: initially, these are only the key statements. The probability to pick statement $s$ is proportional to $\psi ^ {Level(s)}$. 
	After the conclusion is chosen, it is negated with a probability of~$0.5$ to create an attacking argument.
	\item Next, the number of premises $t$ is picked according to $d$.
	\item After that, each of $t$ premises is picked independently.
	A statement $s$ is picked with probability proportional to $\gamma ^ {Arguments(s)}$, where $Arguments(s)$ denotes the number of arguments in which $s$ occurs. Especially, the statements that have not been used in arguments yet obtain the largest probability.
	\item Finally, we check if the AM stays satisfiable with the newly created argument, that is, there exists a complete consistent position with respect to the current AM.
	If so, the argument is added to the AM. Otherwise, the argument is dismissed and the process is repeated. To avoid infinite loops, a termination criterion is added: for example, restart the algorithm or return an error if there were too many unsuccessful attempts to create an argument. 
\end{enumerate}
The pseudocode can be found in Appendix~\ref{appendix:pseudocode}.

\section{Approximation of the One-Sided Coherence}\label{sec:heuristics}

In this section, we present the heuristics for the approximation of the one-sided coherence: 
\[
	OneCoh(A, B) = \frac{1}{2^{k_A} - 1} \sum_{\emptyset \neq X \subseteq A}Conf(X, B). 
\]
To solve the problem, we have taken a closer look at the definition and the interpretation of the confirmation measure $Conf(\cdot, \cdot)$. A confirmation value is a real number between -1 and 1. We recall the meaning of these extreme values.
It holds:
\[
	Conf(X, B) = -1 \Leftrightarrow B \vDash_I \lnot X,
\] 
or in other words: the union of $B$ and $X$ is logically contradictory (if we assume that the arguments $I$ hold). 
A simple special case occurs when~$X$ contains a literal $s$ such that $\lnot s$ belongs to $B$. In this case, $B$ and $X$ are contradictory regardless of the set of arguments $I$. 

Similarly, it holds:
\[
	Conf(X, B) = 1 \Leftrightarrow B \vDash_I X,
\]
that is, $B$ logically implies $X$ (if we again assume that the arguments $I$ hold). Inter alia, this happens when $X \subseteq B$: Regardless of the set of arguments~$I$, the position $B$ logically implies its subsets. 

This results in two types of subsets for which the confirmation values can be determined without running the model counter. This motivates the following lemma.

\begin{lemma}\label{lemma:simple_bound}
	Let $A, B \subset S$ be positions. We set: 
	\[
		Neg(A, B) \stackrel{\text{def}}{=} \{ x \in A \mid \lnot x \in B\}
	\]
	\[
		Com(A, B) \stackrel{\text{def}}{=} A \cap B
	\]
	Then:
	\begin{enumerate}
		\item For every $X \subseteq A$ with $X \cap Neg(A,B) \neq \emptyset$: $Conf(X, B) = -1$. And it holds: 
		\[
			\big| \{ X \mid X \subseteq A, X \cap Neg(A,B) \neq \emptyset \} \big| = 2^{|A|} - 2^{|A| - |Neg(A,B)|}
		\]
		
		\item For every $\emptyset \neq X \subseteq Com(A, B)$: $Conf(X, B) = 1$. And it holds: 
		\[
			\big| \{X \mid \emptyset \neq X \subseteq Com(A, B)\} \big| = 2^{|Com(A, B)|} - 1
		\]
	\end{enumerate}
\end{lemma}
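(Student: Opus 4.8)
The plan is to prove each of the two parts by combining a logical characterization (to pin down the confirmation value) with an elementary counting argument (to establish the cardinality).

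For part (1), I would first argue that any subset $X \subseteq A$ meeting $Neg(A,B)$ forces $Conf(X,B) = -1$. Pick some $x \in X \cap Neg(A,B)$; by definition $x \in A$ and $\lnot x \in B$. Since $X$ is a position assigning True to $x$ and $B$ assigns True to $\lnot x$, the union of $X$ and $B$ is logically contradictory: no complete consistent position can extend both (it would have to satisfy $A(x) \neq A(\lnot x)$, violating the first consistency condition). Hence every complete consistent position extending $B$ must disagree with $X$ on at least one statement, i.e.\ $B \vDash_I \lnot X$, which by the characterization recalled in the excerpt before the lemma gives $Conf(X,B) = -1$. For the count, I would complement: the subsets $X \subseteq A$ that do \emph{not} meet $Neg(A,B)$ are exactly the subsets of $A \setminus Neg(A,B)$, of which there are $2^{|A| - |Neg(A,B)|}$. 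Subtracting from the total $2^{|A|}$ yields the stated formula.

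For part (2), I would first show that any nonempty $X \subseteq Com(A,B) = A \cap B$ satisfies $Conf(X,B) = 1$. Since $X \subseteq B$, every complete consistent position extending $B$ already agrees with $B$ on all of $\mathrm{dom}(B) \supseteq X$, hence extends $X$ as well; this is exactly $B \vDash_I X$, giving $Conf(X,B) = 1$ by the second characterization recalled before the lemma. The cardinality is immediate: the nonempty subsets of a set of size $|Com(A,B)|$ number $2^{|Com(A,B)|} - 1$.

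The counting steps are routine, so the only genuine subtlety is making the logical implications airtight, in particular handling the degenerate edge cases. One should check that the characterizations $Conf(X,B) = \pm 1 \Leftrightarrow B \vDash_I X$ (resp.\ $\lnot X$) used here indeed cover the situation where no complete consistent position extends $B$ at all; since the AM is assumed satisfiable relative to the positions under consideration, $\sigma_B > 0$ and the degree-of-justification expressions are well defined, so I do not expect this to cause trouble. The main thing to get right is the direction of the argument in part (1): meeting $Neg(A,B)$ is a \emph{sufficient} condition for contradiction, and the clean way to see it is through the per-statement consistency requirement $A(s) \neq A(\lnot s)$, which rules out any shared extension. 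Once that observation is stated precisely, both claims follow with essentially no further work.
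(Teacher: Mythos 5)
Your proposal is correct and follows essentially the same route as the paper's own proof: in part~(1) you pick a witness $x \in X \cap Neg(A,B)$, argue via the consistency condition that every complete consistent position extending $B$ assigns False to $x$ and hence fails to extend $X$, giving $B \vDash_I \lnot X$, and count the complementary subsets of $A \setminus Neg(A,B)$; in part~(2) you observe $X \subseteq B$ forces $B \vDash_I X$ and count nonempty subsets. The only difference is cosmetic: you make explicit the appeal to the consistency requirement $A(s) \neq A(\lnot s)$ and the degenerate case $\sigma_B = 0$, both of which the paper leaves implicit.
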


\begin{proof}
We prove the claims separately.
\begin{enumerate}
\item Consider $X \subseteq A$ with $X \cap Neg(A, B) \neq \emptyset$. 
There exists $x \in X \cap Neg(A, B)$. 
For an arbitrary complete consistent position $P$ extending $B$, it holds $P(x) = false$ since $\lnot x \in B$. 
Because of $x \in X$, the position $P$ does not extend $X$. 
Thereby,~$B \vDash_I \lnot X$ and hence, $Conf(X, B) = -1$. 
The subsets of~$A$ which are disjoint from $Neg(A,B)$ are exactly the subsets of $A \setminus Neg(A, B)$ and hence, there are $2 ^ {|A| - |Neg(A, B)|}$ of them.

\item Consider $\emptyset \neq X \subseteq Com(A, B)$. By the definition of $Com(A, B)$, we have:~$X \subseteq B$. So every complete consistent position extending $B$ extends $X$ too and hence,~$B \vDash_I X$ and~$Conf(X, B) = 1$. There are $2^{|Com(A, B)|} - 1$ non-empty subsets of $Com(A, B)$. \qed
\end{enumerate}
\end{proof}

This lemma provides lower bounds on the number of subsets with confirmation values -1 and 1. To calculate these bounds, subsets $Com(A,B), Neg(A,B) \subseteq A$ are required. They can be computed in $\mathcal{O}(max\{|A| \log |A|, |B| \log |B|\})$ (i.e.,~polynomial time) by first sorting the literals in opinions $A$ and $B$ and then iterating over the two sorted lists. The lemma can be strengthened to the following (the proof is analogous and can be found in Appendix~\ref{app:proof_improved_bounds}): 

\begin{lemma}\label{lemma:improved_bounds}
Let $A, B \subset S$ be positions. Let 
	\[
		Cntr(A, B) \stackrel{\text{def}}{=} \{ x \in A \mid B \vDash_I \lnot \{x\}\}
	\]
	and
	\[
		Impl(A, B) \stackrel{\text{def}}{=} \{ x \in A \mid B \vDash_I \{x\}\}
	\]
	Then:
	\begin{enumerate}
		\item For every $X \subseteq A$ with $X \cap Cntr(A, B) \neq \emptyset$: $Conf(X, B) = -1$. And: 
		\[
			\big| \{ X \mid X \subseteq A, X \cap Cntr(A, B) \neq \emptyset \} \big| = 2^{|A|} - 2^{|A| - |Cntr(A,B)|} 
		\]
		\item For every $\emptyset \neq X \subseteq Impl(A, B)$: $Conf(X, B) = 1$. And:
		\[
			\big| \{ X \mid \emptyset \neq X \subseteq Impl(A, B) \} \big| = 2^{|Impl(A,B)|} - 1
		\]
	\end{enumerate}
\end{lemma}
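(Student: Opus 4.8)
The plan is to mirror the proof of Lemma~\ref{lemma:simple_bound} almost verbatim, replacing the syntactic certificate sets $Neg(A,B)$ and $Com(A,B)$ by their semantic counterparts $Cntr(A,B)$ and $Impl(A,B)$. The only conceptual change is that, where the earlier proof read off a truth value directly from an explicit membership such as $\lnot x \in B$, we now invoke the definition of $\vDash_I$ instead. I would first observe that this is a genuine strengthening: whenever $\lnot x \in B$ we certainly have $B \vDash_I \lnot\{x\}$, so $Neg(A,B)\subseteq Cntr(A,B)$, and likewise $x \in B$ gives $B \vDash_I \{x\}$, so $Com(A,B)\subseteq Impl(A,B)$. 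Hence the bounds obtained below dominate those of Lemma~\ref{lemma:simple_bound}.

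For the first claim, I would fix an $X \subseteq A$ with $X \cap Cntr(A,B) \neq \emptyset$ and pick some $x$ in the intersection. By the definition of $Cntr(A,B)$ we have $B \vDash_I \lnot\{x\}$, i.e.\ every complete consistent position extending $B$ assigns $x$ the value False; since $x \in X$, no such position extends $X$, which is exactly $B \vDash_I \lnot X$. The third case in the definition of $Conf$ then yields $Conf(X,B) = -1$. The counting step is identical to the one in Lemma~\ref{lemma:simple_bound}: the sets $X \subseteq A$ that avoid $Cntr(A,B)$ are precisely the subsets of $A \setminus Cntr(A,B)$, of which there are $2^{|A|-|Cntr(A,B)|}$, so the complement has size $2^{|A|} - 2^{|A|-|Cntr(A,B)|}$.

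For the second claim, I would take any non-empty $X \subseteq Impl(A,B)$. Here the one point needing slightly more care than the syntactic case is combining the per-literal implications into an implication of the whole set: for each $x \in X$ the definition of $Impl(A,B)$ gives $B \vDash_I \{x\}$, so every complete consistent position extending $B$ assigns $x$ the value True; as this holds simultaneously for all $x \in X$, every such position extends $X$, which is $B \vDash_I X$, and the second case of $Conf$ gives $Conf(X,B) = 1$. The count of non-empty subsets of $Impl(A,B)$ is again the purely combinatorial $2^{|Impl(A,B)|} - 1$. I do not expect any real obstacle: the argument is structurally the proof of Lemma~\ref{lemma:simple_bound} with the semantic definition of $\vDash_I$ substituted for explicit membership, and the combinatorics is unchanged. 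The genuine cost of the strengthening is hidden in computing $Cntr$ and $Impl$, which---unlike $Neg$ and $Com$---require implication checks rather than a sorted scan; but this concerns their evaluation, not the correctness of the lemma.
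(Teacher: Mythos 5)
Your proof is correct and takes essentially the same approach as the paper's own proof in Appendix~\ref{app:proof_improved_bounds}: a witness $x \in X \cap Cntr(A,B)$ forces every complete consistent position extending $B$ to falsify $x$ and hence not extend $X$, the per-element implications $B \vDash_I \{x\}$ are combined over all $x \in X$ to get $B \vDash_I X$, and the counting is the same purely combinatorial step as in Lemma~\ref{lemma:simple_bound}. Incidentally, your write-up silently fixes a typo in the paper's appendix, which in the second claim writes $B \vDash_I \lnot \{x\}$ where $B \vDash_I \{x\}$ is meant.
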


Observe that $Neg(A, B) \subseteq Cntr(A, B)$ and $Com(A, B) \subseteq Impl(A, B)$.
Thus, Lemma~\ref{lemma:improved_bounds} yields stronger lower bounds on the number of subsets of $A$ with confirmation values -1 and 1. However, the computation of these bounds is not necessarily polynomial anymore. One way to determine the set $Cntr(A, B)$ is to iterate over statements $x \in A$ and check if $\sigma_B \stackrel{\text{?}}{=} \sigma_{B \cup \{x\}}$ using a \#SAT solver. This is exactly the case if $x \in Impl(A,B)$. Another way is to apply a SAT solver to check if $B$ and $I$ logically imply $x$. 
Similarly, to compute the set~$Cntr(A,B)$ we iterate over statements $x \in A$ and check if $\sigma_{B\cup \{x\}} \stackrel{\text{?}}{=} 0$ or, equivalently if $B$ and $I$ logically imply $\lnot x$. 
Each of these approaches requires~$\mathcal{O}(|A|)$ runs of a \#SAT or a SAT solver, respectively.

\begin{figure}[b]
    \centering
    \includegraphics[scale=0.3]{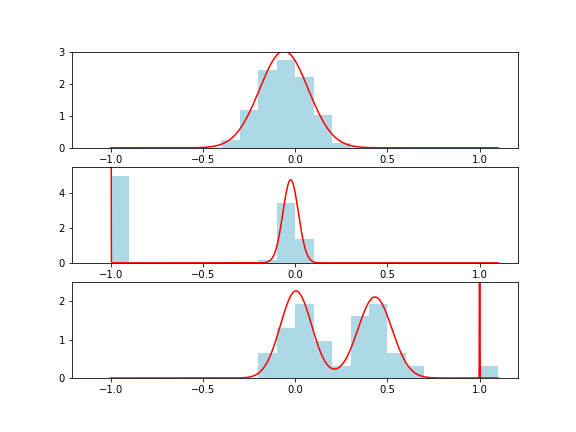}
    \caption{
        The distribution of confirmation values (on the x-axis) normally has up to three peaks. 
    }
    \label{fig:3_gaussians}
\end{figure}

These theoretical lower bounds have inspired us to look at the real distribution of confirmation values. 
We have plotted it for different opinion pairs and different AMs and observed that in most cases, the distribution has up to three peaks and therefore can possibly be approximated by a mixture of three Gaussians (see Figure~\ref{fig:3_gaussians}). 
Such a mixture is determined by the following mixture parameters: 
\begin{enumerate}
	\item Mean values $\mu = (\mu_1,  \mu_2, \mu_3)$
	\item Standard deviations $\sigma = (\sigma_1, \sigma_2, \sigma_3)$
	\item Mixture parameters $w = (w_1, w_2, w_3)$ with $w_1, w_2, w_3 \geq 0$ and $w_1 + w_2 + w_3 = 1$ 
\end{enumerate} 
Then, the probability density function is given by 
\[
	\mathcal{N}_3(w, \mu, \sigma) \stackrel{\text{def}}{=} w_1 \mathcal{N}(\mu_1, \sigma_1^2) + w_2 \mathcal{N}(\mu_2, \sigma_2^2) + w_3 \mathcal{N}(\mu_3, \sigma_3^2),
\] 
where $\mathcal{N}(\mu_i, \sigma_i^2)$ denotes the normal distribution with the mean (i.e., expectation)~$\mu_i$ and standard deviation~$\sigma_i$. 
As a result, the expected (i.e., mean) value of the distribution is given by 
\[
	\mathbb{E}(\mathcal{N}_3) = w_1\mu_1 + w_2\mu_2 + w_3\mu_3.
\]
We again recall the definition of the one-sided coherence that we want to approximate:
\[
	OneCoh(A, B) = \frac{1}{2^{k_A} - 1} \sum_{\emptyset \neq X \subseteq A}Conf(X, B). 
\]
So the one-sided coherence is the mean value of the confirmation values. 
This motivates the main approach we follow in our heuristics:
Model the distribution of confirmation values as a mixture of three Gaussians, estimate the mixture parameters $w, \mu,\sigma$, and finally, compute the mean value of the distribution which is then an approximation of the one-sided coherence. 
Note that $\sigma_1, \sigma_2, \sigma_3$ do not influence the mean value $\mathbb{E}(\mathcal{N}_3)$ and hence, it is not necessary to estimate them. 

\subsection{The Heuristics}
\subsubsection{The Estimation of Weights.}
Lemma~\ref{lemma:simple_bound} provides lower bounds on the fraction of subsets with confirmation values -1 and 1, so we can fix the leftmost and the rightmost Gaussians as follows:
\begin{enumerate}
	\item \[
		\mu_1 = -1,  w_1 = \frac{2^{|A|} - 2^{|A| - |Neg(A, B)|}}{2^{|A|} - 1}
	\]
	\item 
	\[
		\mu_3 = 1, w_3 = \frac{2^{|Com(A, B)|} - 1}{2^{|A|} - 1}
	\]
	\item  
	\[
		w_2 = 1 - w_1 - w_3
	\]
\end{enumerate}
We call it the \textit{simpler estimation} of mixture weights or just \textit{simpler weights}. 
Alternatively, the mixture weights $w_1$ and $w_3$ (and hence $w_2 = 1 - w_1 - w_3$) can be defined by replacing $Neg(A, B)$ with $Cntr(A, B)$ and $Com(A, B)$ with~$Impl(A, B)$ (see Lemma~\ref{lemma:improved_bounds}). We call it the \textit{finer estimation} of weights or just \textit{finer weights}.
We expect that the finer estimation results in better accuracy but longer running time.
Based on the weights (finer or simpler), we have developed the following four heuristics. 
\subsubsection{Direct Estimation.}

The simplest action is to set $\mu_2 = 0$.
The underlying assumption is that the subsets $X$ of $A$ which are neither contradictory to $B$ nor implied by it are just logically independent of $B$. This corresponds to $Conf(X, B) = 0$. 
We also assume that lower bounds from Lemma~\ref{lemma:simple_bound} or~\ref{lemma:improved_bounds} yield a good approximation of the fraction of subsets with confirmation values -1 and 1. 
As soon as the weights are computed, this approach requires constant time to get the approximated value. For this reason, the direct estimation with simpler weights is a fully polynomial-time algorithm. The direct estimation with finer weights has a complexity of $\mathcal{O}(|A|)$ model counting operations.

The remaining three approaches are sample-based: we sample $\beta \cdot |A|$ subsets of $A$, compute the corresponding confirmation values $V = \{v_1, \dots, v_{\beta \cdot |A|} \}$ using a \#SAT solver, and use these values to estimate the mean value of the distribution~$\mathcal{N}_3$ in different ways. Since the computation of every confirmation value requires a run of a model counter, the number of samples is linear in the size of the opinion to hold the number of these runs small.

\subsubsection{Average.}

The simplest approach is just to compute the mean value of samples and return it, so the approximation is:
\[
	mean(V) \stackrel{\text{def}}{=} \frac{1}{|V|}\sum_{v \in V} v.
\]
Note that this approach does not use the estimated weights and relies only on the set of samples $V$. For this reason, we expect that this approach will tend to require more samples to achieve the same accuracy compared with the approaches that additionally utilize the estimated weights.
The complexity is~$\mathcal{O}(\beta \cdot |A|) = \mathcal{O}(|V|)$ runs of the model counter.

\subsubsection{Average $\mu_2$.}

This approach combines the previous two as follows. We act similarly to direct estimation but instead of $\mu_2 = 0$, we set:
\[
	\mu_2 = mean(V).
\]
This way, we assume that all samples $v \in V$ belong to the middle Gaussian. The complexity of this approach consists of the weights' estimation and of the average-approach. Note that the new average-$\mu_2$-approach has the disadvantage that it might take certain subsets into account twice: first, in $w_1$ or $w_3$ and second, as a sample in $V$. This might lead to the under- or overestimation of the mean value of the distribution. Later, we will describe a filtering technique to avoid this problem. 

\subsubsection{Fit $\mu_2$.}
Another solution for this problem is to estimate the position of the second Gaussian (i.e., $\mu_2$) by applying the \textit{Expectation-Maximization (EM) algorithm}~\cite{dempster1977maximum}. 
In statistics, the EM algorithm is an iterative method to estimate the model parameters of a distribution where the model depends on unobserved variables. In our case, the unobserved variable corresponds to the Gaussian to which a point belongs.

In this way, the aforementioned subsets will not be taken into account twice: instead, the EM algorithm will ``assign'' them to the leftmost or the rightmost Gaussian, respectively. To be more precise, our adaptation of the EM algorithm runs as follows: Fix $w_1, w_2, w_3$ (simpler or finer weights), $\mu_1 = -1$, and $\mu_3 = 1$ so that they stay constant during the algorithm; The expectation step of the algorithm is the same as in the original EM algorithm; however, in the maximization step, we only change $\mu_2$ according to the EM algorithm.
We expect this algorithm to perform most accurately since it overcomes the disadvantages of the previous approaches.
The running time is the running time of the average-approach plus the running time of the EM algorithm. The latter depends on the number of iterations needed until convergence (normally a small constant) and one iteration is linear in~$|V|$ (and hence in~$|A|$).

\subsection{Filtering}
As we mentioned earlier, a disadvantage of the average-$\mu_2$-approach is that it tends to take certain subsets into account twice. The fit-$\mu_2$-approach solves this problem with the EM algorithm just assigning such samples to the leftmost or the rightmost Gaussian which are already fixed. This approach, in turn, has the disadvantage that such subsets are useless because they contain no new information. To solve both of these problems, we introduce the \textit{filtering}-technique. In the \textit{filtered} average-$\mu_2$- and filtered fit $\mu_2$-approaches, instead of sampling from all subsets of $A$, we only sample subsets $\emptyset \neq X \subseteq A$ such that (assume we use simpler weights):
\[
	X \cap Neg(A,B) = \emptyset \text{ and } X \nsubseteq Com(A,B).
\]
This way, no subset will be taken into account twice.

\section{Results}\label{sec:results}

\subsection{The Description of the Dataset}

In this section, we evaluate the accuracy of our heuristics.
We have run the algorithms on synthetic argument maps. As we mentioned at the beginning, the real argument maps are mostly small so that both the \#SAT solving and the computation of the one-sided coherence can be computed efficiently. For this reason, in this project, we were more interested in larger argument maps so and tested the approximation algorithms on synthetic data. 

The synthetic argument maps were created according to~Section~\ref{sec:synthetic} with the following parameters: 
\begin{itemize}
	\item $n \in \{50, 100, 150, 200\}$
	\item $\gamma = \psi = 0.5$
	\item $\alpha \in \{0.3, 0.5, 0.7\}$
	\item $k \in \{3, 5, 7, 10\}$
\end{itemize} 
We have tried out different combinations of these parameters in order to make the dataset as various as possible. However, for some combinations (e.g., $n = 200$ and~$\alpha \in \{0.5, 0.7\}$) the running time of model counting became infeasible (especially, if it had to be done for every subset of an opinion). For this reason, there are more argument maps with smaller $n$, and larger argument maps are created with tendentially smaller $\alpha$. 
After that, for every argument map, we create a set of opinion pairs where opinions have sizes 5, 7, and 10. For every argument map and every opinion size, we tried to create several pairs of opinions. Again, we encountered the problem that the computation of the ground-truth~(i.e., one-sided coherence) for some pairs of opinions was infeasible (i.e., the model counting took too long). For this reason, some combinations of parameters have been excluded~(e.g., for argument maps consisting of 200 statements, there are only opinions of size 5).
Totally, we have created 63 argument maps and 1627 pairs of opinions.

The plots will have the following structure. On the x-axis, we have the exact value of the one-sided mutual coherence, and on the y-axis, we have the result of a certain algorithm. Both axes have the range [-1, 1]. Thus, for an accurate approximation algorithm, the points fit the line $x = y$; the line is also shown for convenience.
More details will be provided later.

\begin{figure}[t]
    \centering
    \includegraphics[scale = 0.13]{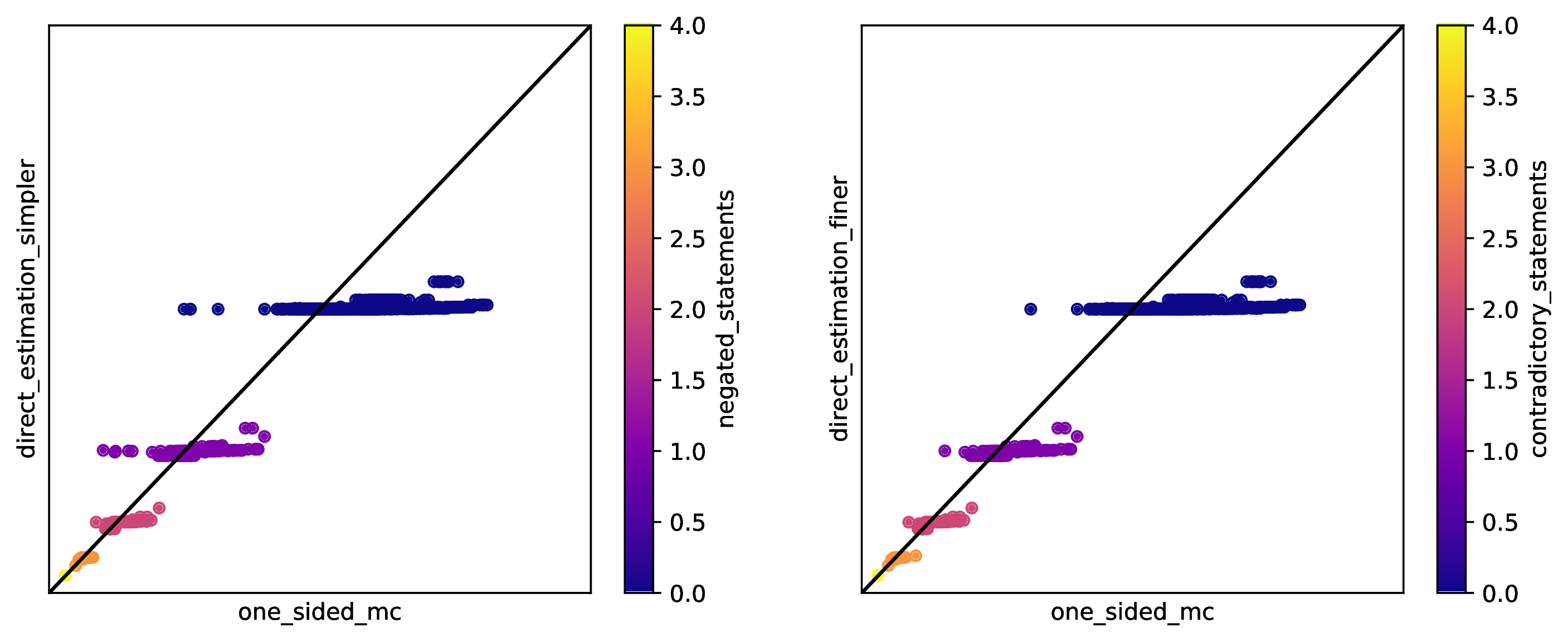}
    \caption{
    	Direct estimation of one-sided coherence: simpler weights on the left and finer weights on the right. The color denotes the number of negated or contradictory statements in the opinion pair, respectively.
    }
    \label{fig:direct_est}
\end{figure}

\subsection{Direct Estimation}\label{subsec:direct_estimation}
Here we have tested direct estimation with simpler and finer weights (see Figure~\ref{fig:direct_est}). Recall that this approach does not use any samples so we did not expect high accuracy.
First, we observe that the plots for simpler and finer weights look very similar. Indeed, it turns out that in our dataset for less than 1\% of pairs of opinions, we either have $Com(A, B) \neq Impl(A,B)$ or $Neg(A, B) \neq Contr(A,B)$. For the remainder, both equalities hold, and hence, finer and simpler weights coincide. For this reason, in practice, we suggest replacing the time-consuming computation of finer weights with simpler weights and we will not distinguish between simpler and finer weights in the following.
We also observe that with increasing $Neg(A,B)$, the accuracy increases noticeably. Indeed there is an explanation for it. Recall the definition of $w_1$:
\[
	w_1 = \frac{2^{|A|} - 2^{|A| - |Neg(A, B)|}}{2^{|A|} - 1}\approx 1 - \frac{1}{2^{|Neg(A, B)|}}
\]
Thereby:
\begin{itemize}
	\item For $Neg(A, B) = 1$, we estimate the confirmation values for $1/2$ of all subsets correctly.
	\item For $Neg(A, B) = 2$, we estimate the confirmation values for $3/4$ of all subsets correctly.
	\item For $Neg(A, B) = 3$, we estimate the confirmation values for $7/8$ of all subsets correctly.
	\item Etc.
\end{itemize}
At this point, we want to remind you of one of the most prominent applications of mutual coherence - the ``Wahl-O-Mat'' system. In practice, there is normally a set of candidates such that each two of them certainly disagree on several statements. Thus, for any voter, her position will also necessarily differ from the positions of most candidates on several statements and hence, the direct estimation will yield an accurate result for the most $(voter, candidate)$-pairs. 
We also emphasize that in such an application the relative order of the mutual coherence values is important and not their absolute values.
Nevertheless, there will possibly still be a (small) set of candidates such that the value $Neg(voter, candidate)$ is small and a more accurate approximation technique is required. For this small number of pairs, one of the sample-based approaches can be applied.

\subsection{Sample-based Approaches}

\begin{figure}[t]
    \centering
    \includegraphics[scale = 0.12]{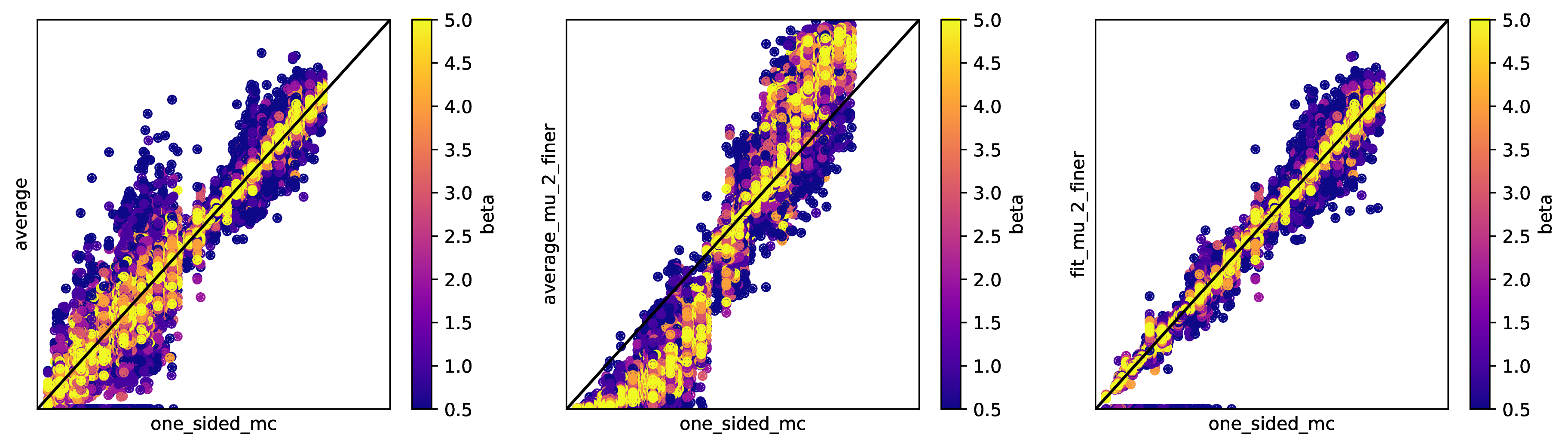}
    \caption{
    	Left: average-approach, middle: average-$\mu_2$-approach, right: fit-$\mu_2$-approach. The color corresponds to $\beta$.
    }
    \label{fig:sample_based}
\end{figure}

The results are demonstrated in Figure~\ref{fig:sample_based} where the color denotes $\beta$ (determines the number of used samples $\beta \cdot |A|$). The plots confirm our expectations. 

First of all, for the same number of samples, the average-approach results in the lowest accuracy since it only takes samples into account and does not utilize the weights. Especially on the negative side where $Neg(A,B) < 0$, the results of the fit-$\mu_2$-approach deviate from the line $x=y$ significantly less since the confirmation values are estimated correctly for more subsets than only for samples. 

As we have also supposed in the previous section, the average-$\mu_2$-approach tends to underestimate and overestimate the value on the negative and the positive side of the plot, respectively. This happens because the subsets with confirmation values -1 and 1 can be taken into account twice. This effect is most evident for yellow points corresponding to the largest value of $\beta$.

Finally, the results of the fit-$\mu_2$-approach fit the desired line $x = y$ and the error decreases with increasing $\beta$.

\begin{figure}[b]
    \centering
    \includegraphics[scale = 0.12]{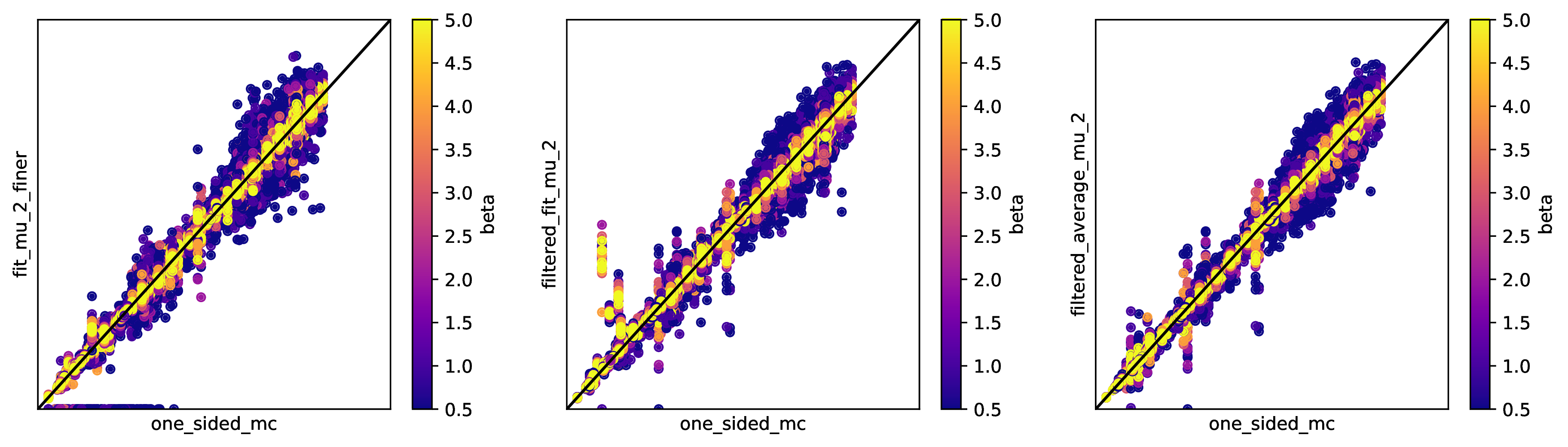}
    \caption{
    	Left: original fit-$\mu_2$-approach, middle: filtered fit-$\mu_2$-approach, right: filtered average-$\mu_2$-approach. The color corresponds to $\beta$.
    }
    \label{fig:filtered}
\end{figure}

\subsection{Filtering} \label{subsec:evaluation_filtering}

In Figure~\ref{fig:filtered}, we present the plots for filtered average-$\mu_2$- and filtered fit-$\mu_2$-approaches. 
For the comparison, the original fit-$\mu_2$-approach is plotted there too.
We expected that filtering would lead to higher accuracy because all samples are ``useful'' and no subset is taken into account twice. 
Although it is not obvious from the plot, the results confirm the expectation. For more details, we refer to Appendix~\ref{app:mse}.
Here only a short summary: the filtered average-$\mu_2$ approach leads to the highest accuracy, after that comes the filtered fit-$\mu_2$-approach and then, the original fit-$\mu_2$-approach. 
A further advantage of the filtered average-$\mu_2$ is that it does not require running the EM algorithm and hence, it is faster than the (filtered) fit-$\mu_2$-approach.
We have also tested robustness to make sure that the accuracy of the filtered-average-$\mu_2$-approach does not depend on parameters other than $\beta$. The plots can also be found in Appendix~\ref{app:robustness}.

\subsection{Summary}

In~Figure~\ref{fig:combi}, we provided the results of different approaches for $\beta = 3$. As we already stated, the average-$\mu_2$-approach leads to the highest accuracy and is the most efficient among the sample-based approaches, and hence, it is the unique leader and the main result of this paper. 

If only small computational power is available, the direct estimation can be applied instead. This approach becomes more accurate with increasing $Neg(A, B)$. In practice, two opinions differ on at least several statements and hence, a non-trivial approximation will be obtained.

\begin{figure}[t]
    \centering
    \includegraphics[scale = 0.12]{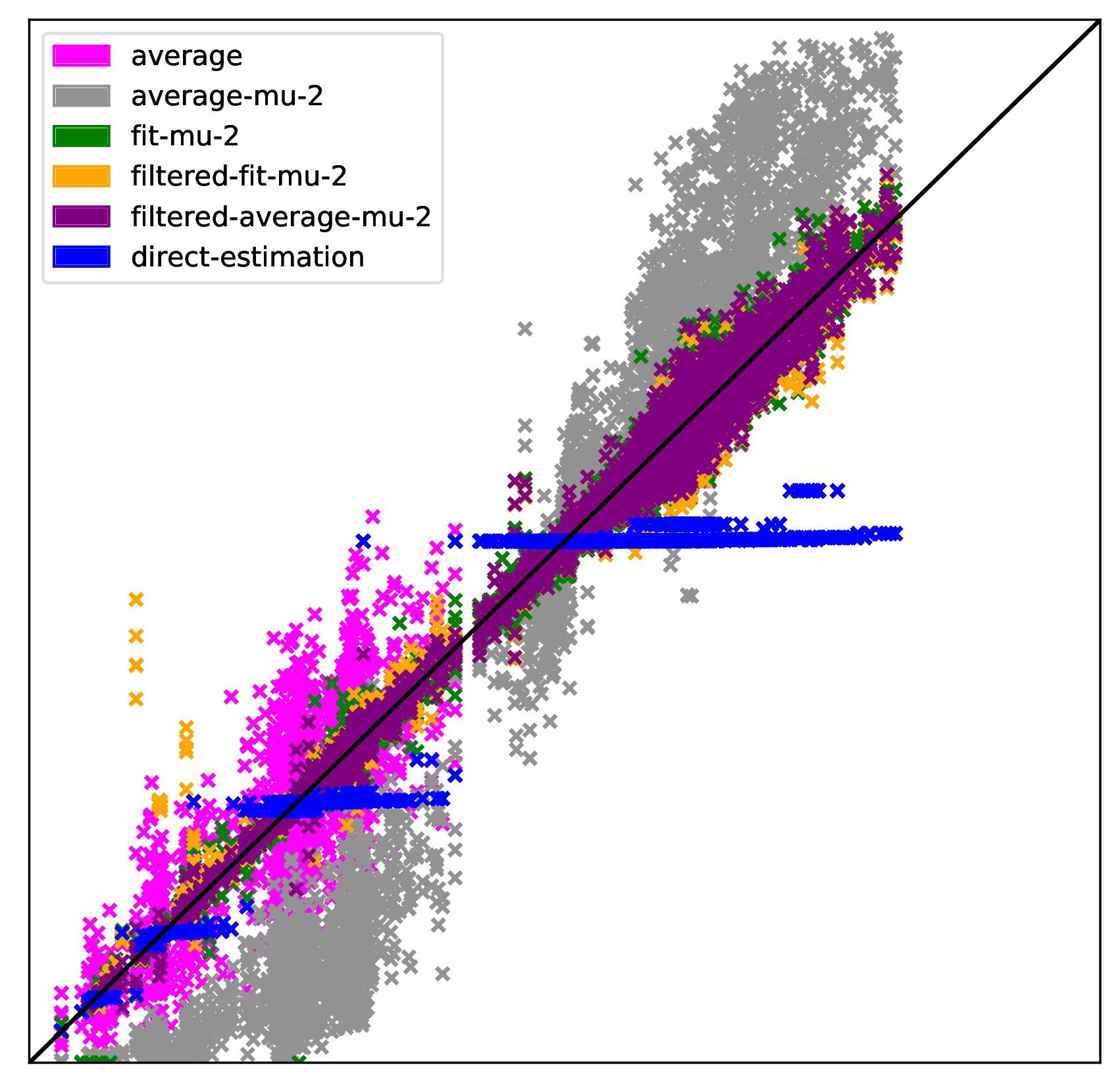}
    \caption{
    	The accuracy of the algorithms for $\beta = 3$. 
    }
    \label{fig:combi}
\end{figure}

\section{Conclusion}\label{sec:conclusion}

In this paper, we dealt with the efficient approximation of one measure of mutual coherence. First, we have encountered the problem of missing test data and as a side result, we have developed an algorithm for the creation of synthetic argument maps. This algorithm gets several parameters as the input determining the size, the density, and the form of created AMs. It also accepts the distribution of the number of premises of argument: this way, we can create synthetic AMs similar to a given real AM. 
The process is randomized, so if we run it multiple times with the same input, we obtain multiple argument maps with similar properties. 
After that, we generated argument maps and used them to test the approximation. 

To solve the main problem of this work (i.e., the approximation of mutual coherence), we have studied the definition of the measure of mutual coherence. First of all, the mutual coherence can be split into two symmetric parts - we call each of them the one-sided coherence. To get an efficient approximation of the mutual coherence, it is enough to get such an approximation for the one-sided coherence. This value is defined as the average confirmation value over all subsets of an opinion. 
The definition of a confirmation value contains an instance of the \#SAT problem. 
Thereby, for the naive computation, the exponential (in the size of the opinion) number of iterations is required and in every iteration, an instance of the SAT model counting problem needs to be solved. This, in turn, required exponential (in the size of the argument map) time in the worst-case. However, the state-of-the-art-\#SAT-solver are more efficient in practice. Our heuristics reduce the number of iterations and hence, the number of runs of a model counter.

The key observation was stated in Lemma~\ref{lemma:simple_bound} was that it is possible to efficiently determine (in general non-trivial) lower bounds on the number of subsets with confirmation values -1 and 1. 
This already reduces the number of needed runs of a model counter. 
In Lemma~\ref{lemma:improved_bounds}, we have improved the lower bounds but their computation became more time-consuming. However, we have later observed that in practice, both lemmas provide the same bounds and hence, we suggest using the results from Lemma~\ref{lemma:simple_bound}.

Next, we have observed that the distribution of confirmation values of an opinion pair can be modeled as a mixture of three Gaussians.
In our heuristics, we sample a linear (in the size of the opinion) number of subsets, apply a \#SAT solver to compute the confirmation values, and then estimate the parameters of the distribution in different ways. Finally, we compute the mean value of the distribution and this is then the desired approximation. 

One algorithm (direct estimation) is fully polynomial-time and if the number of statements on which two opinions differ is non-zero, then we already obtain a good approximation. And the accuracy increases if this number grows. We have also mentioned that in the Wahl-O-Mat application, for most opinion pairs, this number is typically large enough to achieve high accuracy. For the remaining opinion pairs, one of the more time-consuming sample-based approaches can be used. We have shown that with a filtering technique, we obtain an accurate algorithm whose running time requires only a linear (instead of exponential for the naive computation) number of runs of the model counter.

A possible direction for future research would be first to look for further subsets whose confirmation value can be computed efficiently.
\printbibliography

@article{Douven2007a,
abstract = {This paper aims to contribute to our understanding of the notion of coherence by explicating in probabilistic terms, step by step, what seem to be our most basic intuitions about that notion, to wit, that coherence is a matter of hanging or fitting together, and that coherence is a matter of degree. A qualitative theory of coherence will serve as a stepping stone to formulate a set of quantitative measures of coherence, each of which seems to capture well the aforementioned intuitions. Subsequently it will be argued that one of those measures does better than the others in light of some more specific intuitions about coherence. This measure will be defended against two seemingly obvious objections. {\textcopyright} Springer Science+Business Media B.V. 2007.},
author = {Douven, Igor and Meijs, Wouter},
doi = {10.1007/s11229-006-9131-z},
file = {:C$\backslash$:/Users/verac/Dropbox/Vortrag/DouvenMeijs2007.pdf:pdf},
issn = {00397857},
journal = {Synthese},
keywords = {Bayesian epistemology,Coherence,Probability},
number = {3},
pages = {405--425},
title = {{Measuring coherence}},
volume = {156},
year = {2007}
}

@article{Betz2019a,
abstract = {This paper presents OpMAP: A tool for visualizing large scale, multi-dimensional opinion spaces as geographic maps. OpMAP represents opinions as labelings on a structured deductive argumentation framework. It uses probabilistic degrees of justification and Bayesian coherence measures to calculate how strongly any two opinions cohere with each other. The opinion sample is, accordingly, represented as a weighted graph, a so-called opinion graph, with opinion vectors serving as nodes and coherence values as edge weights. OpMAP partitions the nodes of the opinion graph by using clustering methods. Finally, the graph is visualized as a geographic map using a method based on a particular (e.g., force-directed) layout.},
author = {Betz, Gregor and Hamann, Michael and McHedlidze, Tamara and {Von Schmettow}, Sophie},
doi = {10.3233/AAC-181004},
file = {:C$\backslash$:/Users/verac/AppData/Local/Mendeley Ltd./Mendeley Desktop/Downloaded/Betz et al. - 2019 - Applying argumentation to structure and visualize multi-dimensional opinion spaces.pdf:pdf},
issn = {19462174},
journal = {Argument and Computation},
keywords = {Bayesian epistemology,Clustering,Coherence,Degree of justification,Graph drawing,Opinion data,Probabilistic argumentation,Structured argumentation framework,Visualization},
number = {1},
pages = {23--40},
title = {{Applying argumentation to structure and visualize multi-dimensional opinion spaces}},
volume = {10},
year = {2019}
}

@article{Betz2012a,
abstract = {This paper gives an explication of our intuitive notion of strength of justification in a controversial debate. It defines a thesis' degree of justification within the theory of dialectical structures as the ratio of coherently adoptable positions according to which that thesis is true over all coherently adoptable positions. Broadening this definition, the notion of conditional degree of justification, i. e. degree of partial entailment, is introduced. Thus defined degrees of justification correspond to our pre-theoretic intuitions in the sense that supporting and defending a thesis t increases, whereas attacking it decreases, t's degree of justification. Moreover, it is shown that (conditional) degrees of justification are (conditional) probabilities. Eventually, the paper explains that it is rational to believe theses with a high degree of justification inasmuch as this strengthens the robustness of one's position. {\textcopyright} 2011 Springer Science+Business Media B.V.},
author = {Betz, Gregor},
doi = {10.1007/s10670-011-9314-y},
file = {:C$\backslash$:/Vera/PdF/Literature/Philosophy/OnDegreesOfJustification.pdf:pdf},
issn = {01650106},
journal = {Erkenntnis},
number = {2},
pages = {237--272},
title = {{On Degrees of Justification}},
volume = {77},
year = {2012}
}

@article{Schippers2014a,
abstract = {The debate on probabilistic measures of coherence flourishes for about 15 years now. Initiated by papers that have been published around the turn of the millennium, many different proposals have since then been put forward. This contribution is partly devoted to a reassessment of extant coherence measures. Focusing on a small number of reasonable adequacy constraints I show that (i) there can be no coherence measure that satisfies all constraints, and that (ii) subsets of these adequacy constraints motivate two different classes of coherence measures. These classes do not coincide with the common distinction between coherence as mutual support and coherence as relative set-theoretic overlap. Finally, I put forward arguments to the effect that for each such class of coherence measures there is an outstanding measure that outperforms all other extant proposals. One of these measures has recently been put forward in the literature, while the other one is based on a novel probabilistic measure of confirmation.},
author = {Schippers, Michael},
doi = {10.1007/s11229-014-0501-7},
file = {:C$\backslash$:/Vera/PdF/Literature/Philosophy/Probabilistic measures of coherence from adequacy.pdf:pdf},
issn = {15730964},
journal = {Synthese},
keywords = {Confirmation measures,Probabilistic measures of coherence,Probabilistic support},
number = {16},
pages = {3821--3845},
title = {{Probabilistic measures of coherence: from adequacy constraints towards pluralism}},
volume = {191},
year = {2014}
}

@article{shogenji1999coherence,
  title={Is coherence truth conducive?},
  author={Shogenji, Tomoji},
  journal={Analysis},
  volume={59},
  number={4},
  pages={338--345},
  year={1999},
  publisher={JSTOR}
}

@inproceedings{SRSM19,
title={GANAK: A Scalable Probabilistic Exact Model Counter},
author={Sharma, Shubham and  Roy, Subhajit and  Soos, Mate and  Meel, Kuldeep S.},
booktitle={Proceedings of International Joint Conference on Artificial Intelligence (IJCAI)},
month={8},
year={2019}
}

@article{dempster1977maximum,
  title={Maximum likelihood from incomplete data via the EM algorithm},
  author={Dempster, Arthur P and Laird, Nan M and Rubin, Donald B},
  journal={Journal of the Royal Statistical Society: Series B (Methodological)},
  volume={39},
  number={1},
  pages={1--22},
  year={1977},
  publisher={Wiley Online Library}
}

@book{newman2013pros,
  title={Pros and Cons: A Debaters Handbook},
  author={Newman, Debbie and Sather, Trevor and Woolgar, Ben},
  year={2013},
  publisher={Routledge}
}

\newpage

\appendix
\section{Proof of Lemma~\ref{lemma:improved_bounds}}\label{app:proof_improved_bounds}
\begin{lemma}
Let $A, B \subset S$ be positions. Let 
	\[
		Cntr(A, B) \stackrel{\text{def}}{=} \{ x \in A \mid B \vDash_I \lnot \{x\}\}
	\]
	and
	\[
		Impl(A, B) \stackrel{\text{def}}{=} \{ x \in A \mid B \vDash_I \{x\}\}
	\]
	Then:
	\begin{enumerate}
		\item For every $X \subseteq A$ with $X \cap Cntr(A, B) \neq \emptyset$: $Conf(X, B) = -1$. And: 
		\[
			\big| \{ X \mid X \subseteq A, X \cap Cntr(A, B) \neq \emptyset \} \big| = 2^{|A|} - 2^{|A| - |Cntr(A,B)|} 
		\]
		\item For every $\emptyset \neq X \subseteq Impl(A, B)$: $Conf(X, B) = 1$. And:
		\[
			\big| \{ X \mid \emptyset \neq X \subseteq Impl(A, B) \} \big| = 2^{|Impl(A,B)|} - 1
		\]
	\end{enumerate}
\end{lemma}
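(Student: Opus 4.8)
The plan is to mirror the proof of Lemma~\ref{lemma:simple_bound} almost verbatim, since the only change is that the purely syntactic witnesses used there (a literal $x$ with $\lnot x \in B$, or a literal in $A \cap B$) are replaced by the semantically defined sets $Cntr(A,B)$ and $Impl(A,B)$. The crucial point, already baked into the definitions of these two sets, is that membership is phrased for single literals ($B \vDash_I \lnot\{x\}$ and $B \vDash_I \{x\}$), so the only real work is to lift these single-literal facts to an arbitrary subset $X$ and then invoke the relevant case of the definition of $Conf$. The two cardinality identities are literally the same as in Lemma~\ref{lemma:simple_bound} with the set names changed, so I would cite that counting rather than redo it.

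For the first claim, I would start from an arbitrary $X \subseteq A$ with $X \cap Cntr(A,B) \neq \emptyset$ and fix a witness $x \in X \cap Cntr(A,B)$. By definition of $Cntr$ we have $B \vDash_I \lnot\{x\}$, i.e., every complete consistent position $P$ extending $B$ assigns $x$ the value False and hence does not extend $\{x\}$. Since $x \in X$, extending $X$ would in particular force $x$ to True, so no such $P$ extends $X$; therefore $B \vDash_I \lnot X$, and the case $Y \vDash_I \lnot X$ of the definition of $Conf$ gives $Conf(X,B) = -1$. The counting step is unchanged: the subsets of $A$ disjoint from $Cntr(A,B)$ are exactly the subsets of $A \setminus Cntr(A,B)$, of which there are $2^{|A| - |Cntr(A,B)|}$, so by complementation $\big|\{X \mid X \subseteq A,\ X \cap Cntr(A,B) \neq \emptyset\}\big| = 2^{|A|} - 2^{|A| - |Cntr(A,B)|}$.

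For the second claim, I would take an arbitrary nonempty $X \subseteq Impl(A,B)$ and show that every complete consistent position $P$ extending $B$ also extends $X$: for each $x \in X$ we have $x \in Impl(A,B)$, hence $B \vDash_I \{x\}$ and so $P$ assigns $x$ the value True; as this holds for all $x \in X$ simultaneously, $P$ extends $X$. Thus $B \vDash_I X$, and the case $Y \vDash_I X$ of the definition of $Conf$ yields $Conf(X,B) = 1$. The count is again the number of nonempty subsets of $Impl(A,B)$, namely $2^{|Impl(A,B)|} - 1$.

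I do not expect a genuine obstacle here. The only step demanding care is the passage from a single literal to the whole set $X$, where one must track the direction of $\vDash_I$ together with the meaning of ``extends'': forcing one literal of $X$ to False already prevents $X$ from being extended (giving $B \vDash_I \lnot X$ from $B \vDash_I \lnot\{x\}$ for a single $x \in X$), whereas for $B \vDash_I X$ one instead needs \emph{every} literal of $X$ forced to True. Once this monotonicity is stated cleanly, both claims follow immediately, and the cardinality computations carry over unchanged from Lemma~\ref{lemma:simple_bound}.
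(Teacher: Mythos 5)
Your proposal is correct and follows essentially the same argument as the paper's own proof: a single witness $x \in X \cap Cntr(A,B)$ forces $P(x) = \text{False}$ for every complete consistent position $P$ extending $B$ (hence $B \vDash_I \lnot X$), while for the second claim each $x \in X \subseteq Impl(A,B)$ is forced to True so that $P$ extends all of $X$, with identical complementation-based counting. You even state the second claim more cleanly than the paper's appendix, which contains a typo ($B \vDash_I \lnot\{x\}$ where $B \vDash_I \{x\}$ is meant).
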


\begin{proof}
We prove the claims in lemma separately.
\begin{enumerate}
\item Consider $X \subseteq A$ with $X \cap Cntr(A, B) \neq \emptyset$. 
There exists $x \in X \cap Cntr(A, B)$. 
For an arbitrary complete consistent position $P$ extending $B$, it holds $P(x) = false$ since $B \vDash_I \lnot \{x\}$. 
Because of $x \in X$, the position $P$ does not extend $X$. 
Thereby,~$B \vDash_I \lnot X$ and hence, $Conf(X, B) = -1$. 
The subsets of~$A$ which are disjoint from $Cntr(A,B)$ are exactly the subsets of $A \setminus Cntr(A, B)$ and hence, there are $2 ^ {|A| - |Cntr(A, B)|}$ of them.

\item Consider $\emptyset \neq X \subseteq Impl(A, B)$. Consider $x \in X \subseteq Impl(A, B)$. By the definition of $Impl(A, B)$, we have:~$B \vDash_I \lnot \{x\}$, i.e., every complete consistent position extending $B$ extends $\{x\}$ too. Since this holds for every $x \in X$, every complete consistent position extending $B$ extends $X$ too. Thereby, $B \vDash_I X$ and $Conf(X, B) = 1$. And there are $2^{|Impl(A, B)|} - 1$ non-empty subsets of~$Impl(A, B)$. \qed
\end{enumerate}
	
\end{proof}

\newpage

\section{Direct Estimation with Linear Slope on the Positive Side}\label{app:slope}
In Subsection~\ref{subsec:direct_estimation}, we have presented the accuracy of the direct estimation. We observe that although the approach is rather accurate on the negative side, a slope is missing on the positive side. For this reason, we have tried the following out. Instead of the proven lower bound 
\[
	w_3 = \frac{2^{|Com(A, B)|} - 1}{2^{|A|} - 1},
\]we have used the linear relation:
\[
	w_3' = \frac{Com(A, B)}{|A|}.
\]
However, this can lead to $w_1 + w_3' > 1$. Since we did not want to change $w_1$ too, for such pairs of opinions, we have not changed the value of $w_3$. The results of this adaptation can be found in Figure~\ref{fig:direct_estimation_vs_slope}, on the right. This approach is more accurate than the original direct estimation but we have no explanation for this heuristic so we have not included this in the main part of the paper.

\begin{figure}[t]
    \centering
    \includegraphics[scale = 0.5]{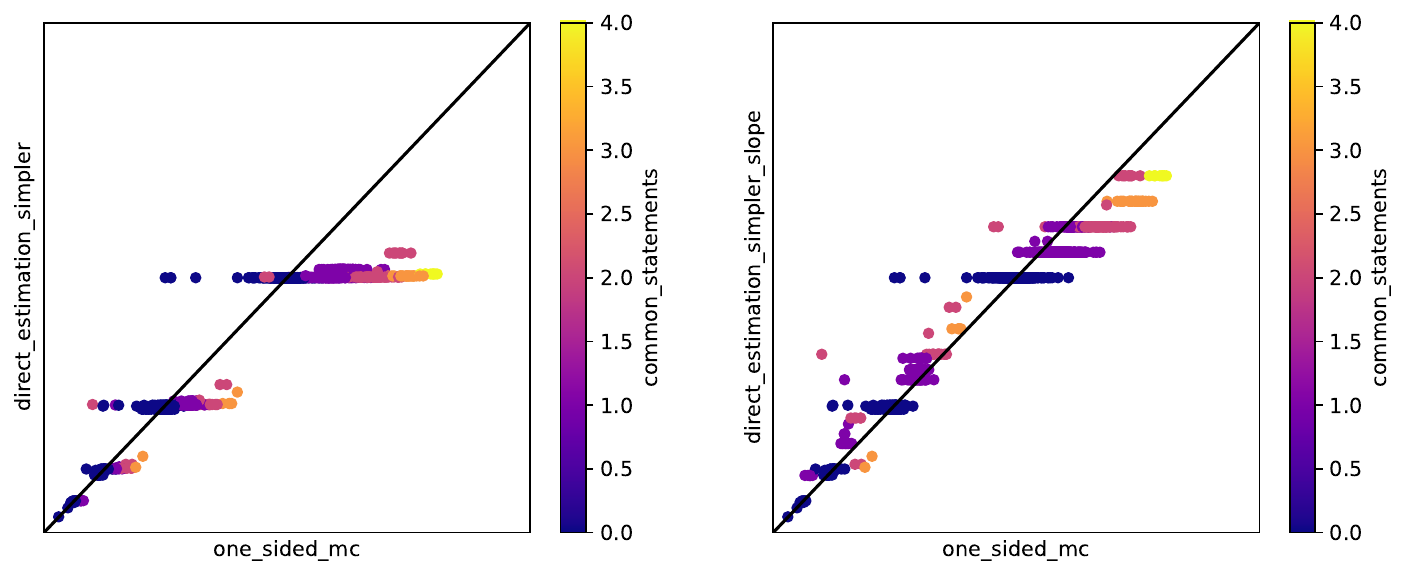}
    \caption{
    	Left: direct estimation, right: direct estimation with adapted weight $w_3'$. 
    }
    \label{fig:direct_estimation_vs_slope}
\end{figure}

\newpage

\section{Comparison of the Sample-based Approaches}\label{app:mse}
In Subsection~\ref{subsec:evaluation_filtering}, we claimed that the filtered-average-$\mu_2$ results in the best accuracy. However, the difference is not obvious from the plots provided there. For this reason, here we present the mean squared error of the approaches mentioned there depending on $\beta$. 

\begin{center}
	\begin{tabular}{|c|c|c|c|c|}
		\hline
		\backslashbox{$\beta$}{Approach} & fit-$\mu_2$ & filtered fit-$\mu_2$ & average $\mu_2$ & filtered average $\mu_2$ \\
		\hline
		0.5 & 0.0135 & 0.0052 & 0.0559 & 0.0035 \\
		\hline
		1 & 0.0050 & 0.0031 & 0.0419 & 0.0019 \\
		\hline
		2 & 0.0023 & 0.0020 & 0.0412 & 0.0010 \\
		\hline
		3 & 0.0016 & 0.0015 & 0.0425 & 0.0007 \\
		\hline
		4 & 0.0013 & 0.0012 & 0.0433 & 0.0005 \\
		\hline
		5 & 0.0011 & 0.0011 & 0.0439 & 0.0004 \\
		\hline
	\end{tabular}
	
\end{center}

\newpage

\section{Robustness Analysis of the Filtered $\mu_2$-approach}\label{app:robustness}

\begin{figure}[t]
    \centering
    \includegraphics[scale = 0.11]{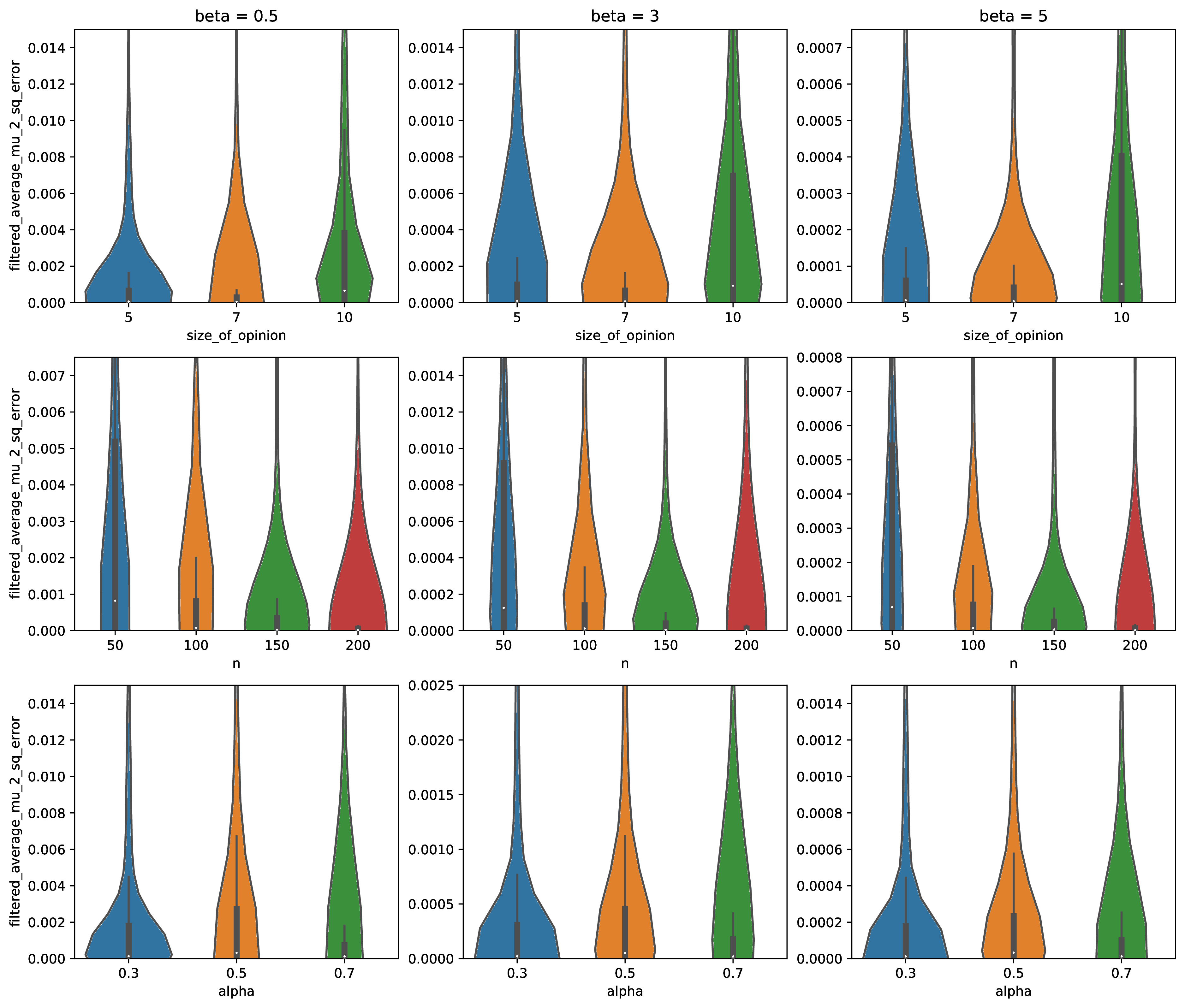}
    \caption{
    	Violin plots for the squared errors of filtered-average-$\mu_2$ approach.
    }
    \label{fig:violins}
\end{figure}

In Figure~\ref{fig:violins}, we have plotted the distribution of the squared errors of the filtered-average-$\mu_2$-approach depending on the size of the opinion (on top), size of the argument map (in the middle), and the parameter $\alpha$ (on bottom) for different values of $\beta$. 
The plots show that the accuracy of the approach does not decrease with increasing $\alpha$ and $n$. However, with increasing size of the opinion, the error increases very slowly.

\section{Generation of Argument Maps in Pseudocode}\label{appendix:pseudocode}

\IncMargin{1em}
\begin{algorithm}
\DontPrintSemicolon
\SetNoFillComment
\LinesNumbered
\SetKwData{Left}{left}\SetKwData{This}{this}\SetKwData{Up}{up}
\SetKwFunction{Union}{Union}\SetKwFunction{FindCompress}{FindCompress}
\SetKwFunction{AddVertices}{AddVertices}
\SetKwFunction{AddVertex}{AddVertex}
\SetKwFunction{PickFrom}{PickFrom}
\SetKwFunction{PickAccordingTo}{PickAccordingTo}
\SetKwFunction{NumberOfArguments}{NumberOfArguments}
\SetKwFunction{PickRandomInRange}{PickRandomInRange}
\SetKwFunction{PushBack}{PushBack}
\SetKwFunction{goto}{goto}
\SetKwFunction{Vertices}{Vertices}
\SetKwFunction{Level}{Level}
\SetKwFunction{Satisfiable}{Satisfiable}
\SetKwFunction{Size}{Size}
\SetKwFunction{NewArgumentVertex}{NewArgumentVertex}
\SetKwFunction{AddRedEdge}{AddRedEdge}
\SetKwFunction{AddGreenEdge}{AddGreenEdge}

\SetKwInOut{Input}{input}\SetKwInOut{Output}{output}
\Input{$n, k, \alpha, \psi, \gamma,distribution$}
\Output{An AM with $n$ statements and $\alpha \cdot n$ arguments}
\BlankLine

$G \leftarrow$ empty graph \;
$G$.\AddVertices($\{1, \dots, k\}$) \tcp*[f]{initial key statements} \; 
$clauses \leftarrow \langle \rangle$ \;

\While{$\Size(clauses) \leq \alpha \cdot n$}{
	$u \leftarrow$ \PickFrom($distribution$) \tcp*[f]{number of premises}\;
	$clause \leftarrow \langle \rangle$ \;
	$premises \leftarrow \langle \rangle$ \;
	$c \leftarrow$ \PickAccordingTo($G$.\Vertices, $v \mapsto \psi^{\Level(v)}$) \;
	\If{$\PickRandomInRange(0,1) > 0.5$} {
		$c \leftarrow \lnot c$ \;
	}
	$clause\leftarrow clause \cup c$\;
	\For{$j \leftarrow 1$ \KwTo $u-1$}{
		$p \leftarrow$ \PickAccordingTo($\langle 1 \dots n \rangle, v \mapsto \gamma^{\NumberOfArguments(v)}$) \; 
		$premises \leftarrow premises \cup p$ \;
		\If{$\PickRandomInRange(0,1) > 0.5$} {
			$p \leftarrow \lnot p$ \;
		}
		$clause\leftarrow clause \cup p$\;
	}
	\If{\Satisfiable($clauses \cup clause$)}{
		$clauses \leftarrow clauses \cup clause$ \;
		$ArgVertex \leftarrow \NewArgumentVertex ()$ \;
		\For{$p \in premises$}{
			G.\AddVertex(p) \tcp*[f]{add the premise to the AM if not already there} \;
			\eIf{$p$ is negative in $clause$}{
				G.\AddGreenEdge(p, ArgVertex) \;
			}{
				G.\AddRedEdge(p, ArgVertex) \;
			}
		}
		\eIf{$c$ is positive in $clause$}{
			G.\AddGreenEdge(ArgVertex, c) \;
		}{
			G.\AddRedEdge(ArgVertex, c) \;
		}
		
	}
}
\Return $(G, clauses)$ \;
\caption{Generation of a synthetic AM}\label{algo_disjdecomp}
\end{algorithm}\DecMargin{1em}

\end{document}